\documentclass[11pt,twoside,a4paper]{article}
\usepackage{pgm}

\usepackage{amsthm}

\newtheorem{theorem}{Theorem}
\newtheorem{lemma}{Lemma}


\theoremstyle{definition}

\theoremstyle{remark}


\usepackage{graphicx,natbib,amssymb,datetime,float,MnSymbol}
\usepackage{tikz}
\usetikzlibrary{arrows}

\def\ci{\!\perp\!}
\def\nci{\!\not\perp\!}
\def\ra{\rightarrow}
\def\la{\leftarrow}
\def\bb{\leftfootline\!\!\!\!\!\rightfootline}
\def\bo{\leftfootline\!\!\!\!\!\multimap}
\def\ob{\mathrel{\reflectbox{\ensuremath{\bo}}}}
\def\bn{\leftfootline}
\def\nb{\rightfootline}
\def\oo{\mathrel{\reflectbox{\ensuremath{\multimap}}}\!\!\!\!\!\multimap}
\newcommand{\comments}[1]{}

\title{Learning AMP Chain Graphs under Faithfulness}

\author{Jose M. Pe\~{n}a\\
ADIT, IDA, Link\"oping University, SE-58183 Link\"{o}ping, Sweden\\
jose.m.pena@liu.se
}

\summary{This paper deals with chain graphs under the alternative Andersson-Madigan-Perlman (AMP) interpretation. In particular, we present a constraint based algorithm for learning an AMP chain graph a given probability distribution is faithful to. We also show that the extension of Meek's conjecture to AMP chain graphs does not hold, which compromises the development of efficient and correct score+search learning algorithms under assumptions weaker than faithfulness.
}

\pagestyle{empty}

\begin{document}

\maketitle

\section{Introduction}\label{sec:introduction}

This paper deals with chain graphs (CGs) under the alternative Andersson-Madigan-Perlman (AMP) interpretation \citep{Anderssonetal.2001}. In particular, we present an algorithm for learning an AMP CG a given probability distribution is faithful to. To our knowledge, we are the first to present such an algorithm. However, it is worth mentioning that, under the classical Lauritzen-Wermuth-Frydenberg (LWF) interpretation of CGs \citep{Lauritzen1996}, such an algorithm already exists \citep{Maetal.2008,Studeny1997}. Moreover, we have recently developed an algorithm for learning LWF CGs under the milder composition property assumption \citep{Pennaetal.2012}.

The AMP and LWF interpretations of CGs are sometimes considered as competing and, thus, their relative merits have been pointed out \citep{Anderssonetal.2001,DrtonandEichler2006,Levitzetal.2001,RoveratoandStudeny2006}. Note, however, that no interpretation subsumes the other: There are many independence models that can be induced by a CG under one interpretation but that cannot be induced by any CG under the other interpretation \citep[Theorem 6]{Anderssonetal.2001}.

The rest of the paper is organized as follows. Section \ref{sec:preliminaries} reviews some concepts. Section \ref{sec:algorithm} presents the algorithm. Section \ref{sec:correctness} proves its correctness. Section \ref{sec:discussion} closes with some discussion.

\section{Preliminaries}\label{sec:preliminaries}

In this section, we review some concepts from probabilistic graphical models that are used later in this paper. All the graphs and probability distributions in this paper are defined over a finite set $V$. All the graphs in this paper are hybrid graphs, i.e. they have (possibly) both directed and undirected edges. The elements of $V$ are not distinguished from singletons. We denote by $|X|$ the cardinality of $X \subseteq V$.

If a graph $G$ contains an undirected (resp. directed) edge between two nodes $V_{1}$ and $V_{2}$, then we write that $V_{1} - V_{2}$ (resp. $V_{1} \ra V_{2}$) is in $G$. The parents of a set of nodes $X$ of $G$ is the set $pa_G(X) = \{V_1 | V_1 \ra V_2$ is in $G$, $V_1 \notin X$ and $V_2 \in X \}$. The neighbors of a set of nodes $X$ of $G$ is the set $ne_G(X) = \{V_1 | V_1 - V_2$ is in $G$, $V_1 \notin X$ and $V_2 \in X \}$. The adjacents of a set of nodes $X$ of $G$ is the set $ad_G(X) = \{V_1 | V_1 \ra V_2$, $V_1 - V_2$ or $V_1 \la V_2$ is in $G$, $V_1 \notin X$ and $V_2 \in X \}$. A route from a node $V_{1}$ to a node $V_{n}$ in $G$ is a sequence of (not necessarily distinct) nodes $V_{1}, \ldots, V_{n}$ such that $V_i \in ad_G(V_{i+1})$ for all $1 \leq i < n$. The length of a route is the number of (not necessarily distinct) edges in the route, e.g. the length of the route $V_{1}, \ldots, V_{n}$ is $n-1$. A route is called a cycle if $V_n=V_1$. A route is called descending if $V_{i} \in pa_G(V_{i+1}) \cup ne_G(V_{i+1})$ for all $1 \leq i < n$. The descendants of a set of nodes $X$ of $G$ is the set $de_G(X) = \{V_n |$ there is a descending route from $V_1$ to $V_n$ in $G$, $V_1 \in X$ and $V_n \notin X \}$. A cycle is called a semidirected cycle if it is descending and $V_{i} \ra V_{i+1}$ is in $G$ for some $1 \leq i < n$. A chain graph (CG) is a hybrid graph with no semidirected cycles. A set of nodes of a CG is connected if there exists a route in the CG between every pair of nodes in the set st all the edges in the route are undirected. A connectivity component of a CG is a connected set that is maximal wrt set inclusion. The connectivity component a node $A$ of a CG $G$ belongs to is denoted as $co_G(A)$. The subgraph of $G$ induced by a set of its nodes $X$ is the graph over $X$ that has all and only the edges in $G$ whose both ends are in $X$. An immorality is an induced subgraph of the form $A \ra B \la C$. A flag is an induced subgraph of the form $A \ra B - C$. If $G$ has an induced subgraph of the form $A \ra B \la C$ or $A \ra B - C$, then we say that the triplex $(\{A,C\},B)$ is in $G$. Two CGs are triplex equivalent iff they have the same adjacencies and the same triplexes.

A node $B$ in a route $\rho$ is called a head-no-tail node in $\rho$ if $A \ra B \la C$, $A \ra B - C$, or $A - B \la C$ is a subroute of $\rho$ (note that maybe $A=C$ in the first case). Let $X$, $Y$ and $Z$ denote three disjoint subsets of $V$. A route $\rho$ in a CG $G$ is said to be $Z$-open when (i) every head-no-tail node in $\rho$ is in $Z$, and (ii) every other node in $\rho$ is not in $Z$. When there is no route in $G$ between a node in $X$ and a node in $Y$ that is $Z$-open, we say that $X$ is separated from $Y$ given $Z$ in $G$ and denote it as $X \ci_G Y | Z$. We denote by $X \nci_G Y | Z$ that $X \ci_G Y | Z$ does not hold. Likewise, we denote by $X \ci_p Y | Z$ (resp. $X \nci_p$ $Y | Z$) that $X$ is independent (resp. dependent) of $Y$ given $Z$ in a probability distribution $p$. The independence model induced by $G$, denoted as $I(G)$, is the set of separation statements $X \ci_G$ $Y | Z$. We say that $p$ is Markovian with respect to $G$ when $X \ci_p Y | Z$ if $X \ci_G Y | Z$ for all $X$, $Y$ and $Z$ disjoint subsets of $V$. We say that $p$ is faithful to $G$ when $X \ci_p Y | Z$ iff $X \ci_G Y | Z$ for all $X$, $Y$ and $Z$ disjoint subsets of $V$. If two CGs $G$ and $H$ are triplex equivalent, then $I(G)=I(H)$.\footnote{To see it, note that there are Gaussian distributions $p$ and $q$ that are faithful to $G$ and $H$, respectively \citep[Theorem 6.1]{Levitzetal.2001}. Moreover, $p$ and $q$ are Markovian wrt $H$ and $G$, respectively, by \citet[Theorem 5]{Anderssonetal.2001} and \citet[Theorem 4.1]{Levitzetal.2001}.}

Let $X$, $Y$, $Z$ and $W$ denote four disjoint subsets of $V$. Any probability distribution $p$ satisfies the following properties: Symmetry $X \ci_p Y | Z \Rightarrow Y \ci_p X | Z$, decomposition $X \ci_p$ $Y \cup W | Z \Rightarrow X \ci_p Y | Z$, weak union $X \ci_p$ $Y \cup W | Z \Rightarrow X \ci_pY | Z \cup W$, and contraction $X \ci_p Y | Z \cup W \land X \ci_p W | Z \Rightarrow X \ci_p Y \cup W | Z$. Moreover, if $p$ is faithful to a CG, then it also satisfies the following properties: Intersection $X \ci_p Y | Z \cup W \land X \ci_p W | Z \cup Y \Rightarrow X \ci_p Y \cup W | Z$, and composition $X \ci_p Y | Z \land X \ci_p W | Z \Rightarrow X \ci_p$ $Y \cup W | Z$.\footnote{To see it, note that there is a Gaussian distribution that is faithful to $G$ \citep[Theorem 6.1]{Levitzetal.2001}. Moreover, every Gaussian distribution satisfies the intersection and composition properties \citep[Proposition 2.1 and Corollary 2.4]{Studeny2005}.}

\section{The Algorithm}\label{sec:algorithm}

Our algorithm, which can be seen in Table \ref{tab:algorithm}, resembles the well-known PC algorithm \citep{Meek1995,Spirtesetal.1993}. It consists of two phases: The first phase (lines 1-9) aims at learning adjacencies, whereas the second phase (lines 10-11) aims at directing some of the adjacencies learnt. Specifically, the first phase declares that two nodes are adjacent iff they are not separated by any set of nodes. Note that the algorithm does not test every possible separator (see line 6). Note also that the separators tested are tested in increasing order of size (see lines 2, 6 and 9). The second phase consists of two steps. In the first step, the ends of some of the edges learnt in the first phase are blocked according to the rules R1-R4 in Table \ref{tab:rules}. A block is represented by a perpendicular line such as in $\bn$ or $\bb$, and it means that the edge cannot be directed in that direction. In the second step, the edges with exactly one unblocked end get directed in the direction of the unblocked end. The rules R1-R4 work as follows: If the conditions in the antecedent of a rule are satisfied, then the modifications in the consequent of the rule are applied. Note that the ends of some of the edges in the rules are labeled with a circle such as in $\bo$ or $\oo$. The circle represents an unspecified end, i.e. a block or nothing. The modifications in the consequents of the rules consist in adding some blocks. Note that only the blocks that appear in the consequents are added, i.e. the circled ends do not get modified. The conditions in the antecedents of R1, R2 and R4 consist of an induced subgraph of $H$ and the fact that some of its nodes are or are not in some separators found in line 7. The condition in the antecedent of R3 is slightly different as it only says that there is a cycle in $H$ whose edges have certain blocks, i.e. it says nothing about the subgraph induced by the nodes in the cycle or whether these nodes belong to some separators or not. Note that, when considering the application of R3, one does not need to consider intersecting cycles, i.e. cycles containing repeated nodes other than the initial and final ones.

\begin{table}[t]
\caption{The algorithm.}\label{tab:algorithm}
\centering
\scalebox{0.8}{
\begin{tabular}{rl}
\\
\hline
\\
& Input: A probability distribution $p$ that is faithful\\
& to an unknown CG $G$.\\
& Output: A CG $H$ that is triplex equivalent to $G$.\\
\\
1 & Let $H$ denote the complete undirected graph\\
2 & Set $l=0$\\
3 & Repeat while possible\\
4 & \hspace{0.2cm} Repeat while possible\\
5 & \hspace{0.5cm} Select any ordered pair of nodes $A$ and $B$ in $H$\\
& \hspace{0.5cm} st $A \in ad_H(B)$ and\\
& \hspace{0.5cm} $|[ad_H(A) \cup ad_H(ad_H(A)) ] \setminus B| \geq l$\\
6 & \hspace{0.5cm} If there exists some\\
& \hspace{0.5cm} $S \subseteq [ad_H(A) \cup ad_H(ad_H(A)) ] \setminus B$ st $|S|=l$ and\\
& \hspace{0.5cm} $A \ci_p B | S$ then\\
7 & \hspace{0.8cm} Set $S_{AB}=S_{BA}=S$\\
8 & \hspace{0.8cm} Remove the edge $A - B$ from $H$\\
9 & \hspace{0.2cm} Set $l=l+1$\\
10 & Apply the rules R1-R4 to $H$ while possible\\
11 & Replace every edge $\bn$ ($\bb$) in $H$ with $\ra$ ($-$)\\
\\
\hline
\\
\end{tabular}}
\end{table}

\begin{table}[t]
\caption{The rules R1-R4.}\label{tab:rules}
\centering
\scalebox{0.75}{
\begin{tabular}{cccc}
\\
\hline
\\
R1:&
\begin{tabular}{c}
\begin{tikzpicture}[inner sep=1mm]
\node at (0,0) (A) {$A$};
\node at (1.5,0) (B) {$B$};
\node at (3,0) (C) {$C$};
\path[o-o] (A) edge (B);
\path[o-o] (B) edge (C);
\end{tikzpicture} 
\end{tabular}
& $\Rightarrow$ &
\begin{tabular}{c}
\begin{tikzpicture}[inner sep=1mm]
\node at (0,0) (A) {$A$};
\node at (1.5,0) (B) {$B$};
\node at (3,0) (C) {$C$};
\path[|-o] (A) edge (B);
\path[o-|] (B) edge (C);
\end{tikzpicture}
\end{tabular}\\
& $\land$ $B \notin S_{AC}$\\
\\
\hline
\\
R2:&
\begin{tabular}{c}
\begin{tikzpicture}[inner sep=1mm]
\node at (0,0) (A) {$A$};
\node at (1.5,0) (B) {$B$};
\node at (3,0) (C) {$C$};
\path[|-o] (A) edge (B);
\path[o-o] (B) edge (C);
\end{tikzpicture}
\end{tabular}
& $\Rightarrow$ &
\begin{tabular}{c}
\begin{tikzpicture}[inner sep=1mm]
\node at (0,0) (A) {$A$};
\node at (1.5,0) (B) {$B$};
\node at (3,0) (C) {$C$};
\path[|-o] (A) edge (B);
\path[|-o] (B) edge (C);
\end{tikzpicture}
\end{tabular}\\
& $\land$ $B \in S_{AC}$\\
\\
\hline
\\
R3:&
\begin{tabular}{c}
\begin{tikzpicture}[inner sep=1mm]
\node at (0,0) (A) {$A$};
\node at (1.5,0) (B) {$\ldots$};
\node at (3,0) (C) {$B$};
\path[|-o] (A) edge (B);
\path[|-o] (B) edge (C);
\path[o-o] (A) edge [bend left] (C);
\end{tikzpicture}
\end{tabular}
& $\Rightarrow$ &
\begin{tabular}{c}
\begin{tikzpicture}[inner sep=1mm]
\node at (0,0) (A) {$A$};
\node at (1.5,0) (B) {$\ldots$};
\node at (3,0) (C) {$B$};
\path[|-o] (A) edge (B);
\path[|-o] (B) edge (C);
\path[|-o] (A) edge [bend left] (C);
\end{tikzpicture}
\end{tabular}\\
\\
\hline
\\
R4:&
\begin{tabular}{c}
\begin{tikzpicture}[inner sep=1mm]
\node at (0,0) (A) {$A$};
\node at (2,0) (B) {$B$};
\node at (1,1) (C) {$C$};
\node at (1,-1) (D) {$D$};
\path[o-o] (A) edge (B);
\path[o-o] (A) edge (C);
\path[o-o] (A) edge (D);
\path[|-o] (C) edge (B);
\path[|-o] (D) edge (B);
\end{tikzpicture}
\end{tabular}
& $\Rightarrow$ &
\begin{tabular}{c}
\begin{tikzpicture}[inner sep=1mm]
\node at (0,0) (A) {$A$};
\node at (2,0) (B) {$B$};
\node at (1,1) (C) {$C$};
\node at (1,-1) (D) {$D$};
\path[|-o] (A) edge (B);
\path[o-o] (A) edge (C);
\path[o-o] (A) edge (D);
\path[|-o] (C) edge (B);
\path[|-o] (D) edge (B);
\end{tikzpicture}
\end{tabular}\\
& $\land$ $A \in S_{CD} \land B \notin S_{CD}$\\
\\
\hline
\\
\end{tabular}}
\end{table}

\section{Correctness of the Algorithm}\label{sec:correctness}

In this section, we prove that our algorithm is correct, i.e. it returns a CG the given probability distribution is faithful to. We start proving a result for any probability distribution that satisfies the intersection and composition properties. Recall that any probability distribution that is faithful to a CG satisfies these properties and, thus, the following result applies to it.

\begin{lemma}\label{lem:conditions}
Let $p$ denote a probability distribution that satisfies the intersection and composition properties. Then, $p$ is Markovian wrt a CG $G$ iff $p$ satisfies the following conditions:
\begin{itemize}
\setlength{\itemsep}{-0.1cm}
\item[C1:] $A \ci_p co_G(A) \setminus A \setminus ne_G(A) | pa_G(A \cup ne_G(A)) \cup ne_G(A)$ for all $A \in V$, and

\item[C2:] $A \ci_p V \setminus A \setminus de_G(A) \setminus pa_G(A) | pa_G(A)$ for all $A \in V$.

\end{itemize}

\end{lemma}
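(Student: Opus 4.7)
The proof has two directions. The forward direction is by direct route analysis in $G$; the backward direction is where intersection and composition are really needed, and it is the main obstacle.

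\textbf{Forward direction (Markovian $\Rightarrow$ C1, C2).} It suffices to verify that the separation statements in C1 and C2 actually hold in $G$, for then Markovianness transfers them to $p$. For C2, I would consider any route $\rho$ in $G$ from $A$ to a node $W \in V \setminus A \setminus de_G(A) \setminus pa_G(A)$ and show that $\rho$ cannot be $pa_G(A)$-open: $\rho$ leaves $A$ either through a parent of $A$ (blocked, since that parent is in the conditioning set and is not a head-no-tail node at the first step) or through a child or neighbor. In the child/neighbor subcases, tracing $\rho$ and using $W \notin de_G(A)$ exhibits either a non-head-no-tail node in $pa_G(A)$ or a head-no-tail node outside $pa_G(A)$, both of which block $\rho$. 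C1 is handled by a similar analysis restricted to $co_G(A)$: conditioning on $ne_G(A)$ blocks undirected routes out of $A$ at their first step, while conditioning on $pa_G(A \cup ne_G(A))$ blocks directed edges arriving into the target region.

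\textbf{Backward direction (C1, C2 $\Rightarrow$ Markovian).} Fix disjoint $X, Y, Z$ with $X \ci_G Y | Z$. I would topologically order the connectivity components of $G$ as $K_1, \ldots, K_m$, so that $pa_G(K_i) \subseteq \bigcup_{j<i} K_j$. For each $K_i$, applying C2 to each $A \in K_i$ and then combining via weak union, decomposition, and composition across $A \in K_i$ should yield that $K_i$ is independent of $\bigcup_{j<i} K_j \setminus pa_G(K_i)$ given $pa_G(K_i)$. Within $K_i$, applying C1 to every $A \in K_i$ and then using intersection (critically) to unify the different conditioning sets should lift these local statements to the standard undirected global Markov property for $K_i$ relative to $pa_G(K_i)$, paralleling the classical pairwise-implies-global argument. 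Contraction then glues the between-component and within-component pieces, and an induction on the topological ordering delivers $X \ci_p Y | Z$.

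\textbf{Main obstacle.} The hardest step will be the within-component lift from C1 to the full undirected Markov property of each $K_i$ given $pa_G(K_i)$, because the conditioning set in C1, namely $pa_G(A \cup ne_G(A)) \cup ne_G(A)$, varies with $A$ and generally exceeds $pa_G(K_i) \cup ne_{K_i}(A)$. Careful, possibly inductive, use of intersection will be needed to align these conditioning sets and shrink the extra parents of neighbors down to $pa_G(K_i)$. Once this step is in hand, the between-component argument via C2 is a routine adaptation of the DAG local-implies-global proof using composition, and the final assembly is a standard contraction argument.
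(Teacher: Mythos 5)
Your strategy is genuinely different from the paper's, and in its present form it has a real gap. The paper does not prove the global Markov property from scratch: it invokes \citet[Theorem 3]{Anderssonetal.2001} and \citet[Theorem 4.1]{Levitzetal.2001} to get that, for a distribution satisfying intersection and composition, Markovianness wrt $G$ is equivalent to two local conditions L1 and L2, where L1 conditions $A$'s non-neighbours in $co_G(A)$ on $[V \setminus co_G(A) \setminus de_G(co_G(A))] \cup ne_G(A)$ and L2 is C2 with $de_G(A)$ rewritten as $[co_G(A) \cup de_G(co_G(A))] \setminus A$. The only work left is a short semi-graphoid derivation: C2 $\Leftrightarrow$ L2 is immediate, and, given L2, C1 $\Leftrightarrow$ L1 follows in a few lines of weak union, symmetry, composition, contraction and decomposition. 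Notably, intersection is not used explicitly in the paper's own derivation at all; it is needed only inside the cited theorems.

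The gap in your proposal is exactly where you flag it: the within-component lift in the backward direction. You propose to derive from C1 and C2 a between-component DAG-like statement plus a full undirected global Markov property for each component given its parents, and to glue these by contraction and induction. But (i) you never actually carry out the lift --- you only note that intersection ``will be needed'' to align the conditioning sets $pa_G(A \cup ne_G(A)) \cup ne_G(A)$, which vary with $A$; and (ii) even granting the component-wise statements, assembling them into $X \ci_p Y | Z$ for \emph{arbitrary} disjoint $X,Y,Z$ with $X \ci_G Y | Z$ is not routine for AMP chain graphs, because $Z$ need not be a union of components nor contain the relevant parent sets, and the route-based AMP separation criterion (head-no-tail nodes) does not decompose cleanly into a DAG part plus undirected parts. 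That assembly is precisely the content of the theorems the paper cites, and reproving it is a substantial piece of work rather than ``a routine adaptation of the DAG local-implies-global proof.'' Your forward direction is fine in spirit --- one does need to check that the C1 and C2 separations hold in $G$ itself --- though the route analysis deserves more care than the sketch gives it, since a route may leave $A$ through a neighbour and only later meet a directed edge. The efficient repair is the paper's: reduce C1 and C2 to the already-established local characterizations instead of rebuilding the global theory.
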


\begin{proof}
It follows from \citet[Theorem 3]{Anderssonetal.2001} and \citet[Theorem 4.1]{Levitzetal.2001} that $p$ is Markovian wrt $G$ iff $p$ satisfies the following conditions:
\begin{itemize}
\setlength{\itemsep}{-0.1cm}
\item[L1:] $A \ci_p co_G(A) \setminus A \setminus ne_G(A) | [ V \setminus co_G(A) \setminus de_G(co_G(A)) ] \cup ne_G(A)$ for all $A \in V$, and

\item[L2:] $A \ci_p V \setminus co_G(A) \setminus de_G(co_G(A)) \setminus pa_G(A) | pa_G(A)$ for all $A \in V$.

\end{itemize}

Clearly, C2 holds iff L2 holds because $de_G(A) = [ co_G(A) \cup de_G(co_G(A)) ] \setminus A$. We prove below that if L2 holds, then C1 holds iff L1 holds. We first prove the if part.

\begin{itemize}
\setlength{\itemsep}{-0.1cm}
\item[1.] $B \ci_p V \setminus co_G(B) \setminus de_G(co_G(B)) \setminus pa_G(B) | pa_G(B)$ for all $B \in A \cup ne_G(A)$ by L2.

\item[2.] $B \ci_p V \setminus co_G(B) \setminus de_G(co_G(B)) \setminus pa_G(A \cup ne_G(A)) | pa_G(A \cup ne_G(A))$ for all $B \in A \cup ne_G(A)$ by weak union on 1.

\item[3.] $A \cup ne_G(A) \ci_p V \setminus co_G(A) \setminus de_G(co_G(A)) \setminus pa_G(A \cup ne_G(A)) | pa_G(A \cup ne_G(A))$ by repeated application of symmetry and composition on 2.

\item[4.] $A \ci_p V \setminus co_G(A) \setminus de_G(co_G(A)) \setminus pa_G(A \cup ne_G(A)) | pa_G(A \cup ne_G(A)) \cup ne_G(A)$ by symmetry and weak union on 3.

\item[5.] $A \ci_p co_G(A) \setminus A \setminus ne_G(A) | [ V \setminus co_G(A) \setminus de_G(co_G(A)) ] \cup ne_G(A)$ by L1.

\item[6.] $A \ci_p [ co_G(A) \setminus A \setminus ne_G(A) ] [ V \setminus co_G(A) \setminus de_G(co_G(A)) \setminus pa_G(A \cup ne_G(A)) ] | pa_G(A \cup ne_G(A)) \cup ne_G(A)$ by contraction on 4 and 5.

\item[7.] $A \ci_p co_G(A) \setminus A \setminus ne_G(A) | pa_G(A \cup ne_G(A)) \cup ne_G(A)$ by decomposition on 6.

\end{itemize}

We now prove the only if part.

\begin{itemize}
\setlength{\itemsep}{-0.1cm}
\item[8.] $A \ci_p co_G(A) \setminus A \setminus ne_G(A) | pa_G(A \cup ne_G(A)) \cup ne_G(A)$ by C1.

\item[9.] $A \ci_p [ V \setminus co_G(A) \setminus de_G(co_G(A)) \setminus pa_G(A \cup ne_G(A)) ] [ co_G(A) \setminus A \setminus ne_G(A) ] | pa_G(A \cup ne_G(A)) \cup ne_G(A)$ by composition on 4 and 8.

\item[10.] $A \ci_p co_G(A) \setminus A \setminus ne_G(A) | [ V \setminus co_G(A) \setminus de_G(co_G(A)) ] \cup ne_G(A)$ by weak union on 9.

\end{itemize}

\end{proof}

\begin{lemma}\label{lem:adjacencies}
After line 9, $G$ and $H$ have the same adjacencies.
\end{lemma}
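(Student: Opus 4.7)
The plan is to prove the two inclusions separately: every edge of $G$ survives in $H$ after line 9, and every non-edge of $G$ is removed from $H$ by line 9.

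For the survival direction, if $A - B$ is in $G$ then by faithfulness $A \nci_p B | S$ for every $S \subseteq V \setminus \{A,B\}$, so the removal condition on line 6 is never met for $(A,B)$. This yields the key invariant that $ad_G(X) \subseteq ad_H(X)$ for every $X \subseteq V$ throughout the entire execution.

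For the removal direction, suppose $A$ and $B$ are not adjacent in $G$. Since $p$ is faithful (hence Markovian) wrt $G$ and satisfies intersection and composition, Lemma~\ref{lem:conditions} yields conditions C1 and C2. I would split into three cases according to the position of $B$ relative to $A$ in $G$.
\textbf{(i)} If $B \notin de_G(A)$ (equivalently $B \notin co_G(A) \cup de_G(A)$, since $co_G(A) \setminus A \subseteq de_G(A)$), then non-adjacency forces $B \notin pa_G(A)$, so C2 and decomposition give $A \ci_p B | pa_G(A)$, with $pa_G(A) \subseteq ad_G(A)$.
\textbf{(ii)} If $B \in co_G(A)$, then non-adjacency forces $B \notin ne_G(A)$, so C1 and decomposition give $A \ci_p B | pa_G(A \cup ne_G(A)) \cup ne_G(A)$; this separator lies in $ad_G(A) \cup ad_G(ad_G(A))$ because $pa_G(ne_G(A)) \subseteq ad_G(ne_G(A)) \subseteq ad_G(ad_G(A))$.
\textbf{(iii)} If $B \in de_G(A) \setminus co_G(A)$, then the descending route from $A$ to $B$ must contain a directed edge (otherwise $B \in co_G(A)$), so the absence of semidirected cycles forces $A \notin de_G(B)$; combined with $A \notin pa_G(B)$ and $A \notin co_G(B)$, applying C2 to $B$ with decomposition gives $A \ci_p B | pa_G(B)$, with $pa_G(B) \subseteq ad_G(B)$.

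In each case I obtain a separator $S$ for some ordered pair (either $(A,B)$ in (i)--(ii) or $(B,A)$ in (iii)) that is contained in $ad_G(\cdot) \cup ad_G(ad_G(\cdot))$ of the first coordinate. By the invariant from the first direction, $S$ is also contained in the algorithm's search space $[ad_H(\cdot) \cup ad_H(ad_H(\cdot))] \setminus (\text{second coord.})$ at every stage of the algorithm. Hence, once the outer loop reaches $l = |S|$ and the appropriate ordered pair is selected, the edge $A - B$ is removed (if it has not been already).

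The main obstacle is Case~(iii), where one must exploit the absence of semidirected cycles in $G$ to justify swapping the roles of $A$ and $B$ so that C2 can be applied in the opposite direction. A secondary technical point is that the size-$l$ outer loop structure of the algorithm forces the invariant $ad_G \subseteq ad_H$ to hold at every intermediate stage, not just at the end, which is precisely what the survival direction guarantees.
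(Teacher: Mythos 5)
Your proposal is correct and follows essentially the same route as the paper: the survival direction via faithfulness, and the removal direction via conditions C1 and C2 of Lemma~\ref{lem:conditions} with the separators $pa_G(A\cup ne_G(A))\cup ne_G(A)$ and $pa_G(\cdot)$ shown to lie in the algorithm's search space. Your three-way case split just makes explicit the ``without loss of generality'' swap of $A$ and $B$ that the paper performs in its second case using the absence of semidirected cycles.
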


\begin{proof}
Consider any pair of nodes $A$ and $B$ in $G$. If $A \in ad_G(B)$, then $A \nci_p B | S$ for all $S \subseteq V \setminus [ A \cup B ]$ by the faithfulness assumption. Consequently, $A \in ad_H(B)$ at all times. On the other hand, if $A \notin ad_G(B)$, then consider the following cases. 

\begin{description}
\setlength{\itemsep}{-0.1cm}
\item[Case 1] Assume that $co_G(A)=co_G(B)$. Then, $A \ci_p co_G(A) \setminus A \setminus ne_G(A) | pa_G(A \cup ne_G(A)) \cup ne_G(A)$ by C1 in Lemma \ref{lem:conditions} and, thus, $A \ci_p B | pa_G(A \cup ne_G(A)) \cup ne_G(A)$ by decomposition and $B \notin ne_G(A)$, which follows from $A \notin ad_G(B)$. Note that, as shown above, $pa_G(A \cup ne_G(A)) \cup ne_G(A) \subseteq [ ad_H(A) \cup ad_H(ad_H(A)) ] \setminus B$ at all times.

\item[Case 2] Assume that $co_G(A) \neq co_G(B)$. Then, $A \notin de_G(B)$ or $B \notin de_G(A)$ because $G$ has no semidirected cycle. Assume without loss of generality that $B \notin de_G(A)$. Then, $A \ci_p$ $V \setminus A \setminus de_G(A) \setminus pa_G(A) | pa_G(A)$ by C2 in Lemma \ref{lem:conditions} and, thus, $A \ci_p B | pa_G(A)$ by decomposition, $B \notin de_G(A)$, and $B \notin pa_G(A)$ which follows from $A \notin ad_G(B)$. Note that, as shown above, $pa_G(A)\subseteq ad_H(A) \setminus B$ at all times.

\end{description}

Therefore, in either case, there will exist some $S$ in line 6 such that $A \ci_p B | S$ and, thus, the edge $A - B$ will be removed from $H$ in line 7. Consequently, $A \notin ad_H(B)$ after line 9. 
\end{proof}

The next lemma proves that the rules R1-R4 are sound, i.e. if the antecedent holds in $G$, then so does the consequent.

\begin{lemma}\label{lem:soundness}
The rules R1-R4 are sound.
\end{lemma}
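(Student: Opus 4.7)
The plan is to verify each of R1--R4 separately, using three ingredients: (i) Lemma~\ref{lem:adjacencies}, so every adjacency in $H$ after line 9 is a real adjacency in $G$; (ii) faithfulness, which, for any non-adjacent pair $A, C$ sharing a common neighbour $B$, turns $B \in S_{AC}$ (resp.\ $B \notin S_{AC}$) into the statement that the triplex $(\{A,C\},B)$ is absent from (resp.\ present in) $G$; and (iii) $G$ has no semidirected cycle. A block at an edge-end encodes the claim ``in $G$ this edge has no arrowhead at that end,'' so soundness amounts to verifying this claim for every block a rule introduces, under the inductive hypothesis that every block already present in the antecedent is itself sound.

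For R1, the antecedent produces a triplex $(\{A,C\},B)$ in $G$, and inspecting the three possible triplex forms $A \ra B \la C$, $A \ra B - C$, and $A - B \la C$ shows that none places an arrowhead at $A$ on $A{-}B$ or at $C$ on $B{-}C$, which is exactly what R1 asserts. For R2, the antecedent gives non-adjacent $A, C$, no arrowhead at $A$ on $A{-}B$ (inductive block), and $B \in S_{AC}$, so no triplex $(\{A,C\},B)$ in $G$; assuming for contradiction an arrowhead at $B$ on $B{-}C$ would make the induced subgraph on $\{A,B,C\}$ be either $A \ra B \la C$ or $A - B \la C$, and both are triplexes.

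For R3, the antecedent displays a cycle in $H$ whose edges other than the direct $A{-}B$ edge each carry a block at the end closer to $A$ along the cycle; by the inductive hypothesis each such edge in $G$ is directed away from that end or is undirected, so this path is a descending route from $A$ to $B$ in $G$. If $A{-}B$ had an arrowhead at $A$, i.e.\ $B \ra A$ in $G$, closing the descending route with this edge would yield a semidirected cycle, contradicting that $G$ is a CG.

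The main obstacle is R4, which requires all three ingredients simultaneously. From $B \notin S_{CD}$, the adjacencies $C{-}B$ and $D{-}B$, and the non-adjacency of $C, D$, faithfulness gives a triplex $(\{C,D\},B)$ in $G$; combined with the inductive blocks at $C$ and $D$ it forces at least one of $C \ra B$, $D \ra B$ in $G$, so without loss of generality assume $C \ra B$. Suppose for contradiction $B \ra A$ in $G$. Then $C \ra B \ra A$ is a descending route, and the adjacency $A{-}C$ rules out both $A \ra C$ and $A - C$ in $G$ (each would close a semidirected cycle), forcing $C \ra A$. Now $A \in S_{CD}$ gives no triplex $(\{C,D\},A)$ in $G$, and together with the arrowhead at $A$ from $C$ this forces a tail at $A$ on $A{-}D$, i.e.\ $A \ra D$ in $G$. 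If moreover $D \ra B$ in $G$, then $A \ra D \ra B \ra A$ is a semidirected cycle; otherwise the triplex at $B$ must be $C \ra B - D$, and $A \ra D - B \ra A$ is again a semidirected cycle. Either way we reach a contradiction, proving that $A{-}B$ has no arrowhead at $A$ in $G$.
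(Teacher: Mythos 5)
Your proof is correct and follows essentially the same approach as the paper's: R1--R3 are handled identically (triplex forms, inductive soundness of existing blocks, and acyclicity), and R4 is the same proof by contradiction from $B \ra A$, merely rearranged --- you derive the directed edges $C \ra A$ and $A \ra D$ before closing the semidirected cycle, whereas the paper first derives $A - C$ and $A - D$ from the absence of a triplex at $A$ and then closes the cycle via the triplex at $B$. Both orderings use exactly the same ingredients, so there is nothing to add.
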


\begin{proof}
According to the antecedent of R1, $G$ has a triplex $(\{A,C\},B)$. Then, $G$ has an induced subgraph of the form $A \ra B \la C$, $A \ra B - C$ or $A - B \la C$. In either case, the consequent of R1 holds.

According to the antecedent of R2, (i) $G$ does not have a triplex $(\{A,C\},B)$, (ii) $A \ra B$ or $A - B$ is in $G$, (iii) $B \in ad_G(C)$, and (iv) $A \notin ad_G(C)$. Then, $B \ra C$ or $B - C$ is in $G$. In either case, the consequent of R2 holds.

According to the antecedent of R3, (i) $G$ has a descending route from $A$ to $B$, and (ii) $A \in ad_G(B)$. Then, $A \ra B$ or $A - B$ is in $G$, because $G$ has no semidirected cycle. In either case, the consequent of R3 holds. 

To appreciate the soundness of R4, assume to the contrary that $A \la B$ is in $G$. Then, it follows from applying R3 to the antecedent of R4 that $G$ has an induced subgraph that is consistent with

\begin{table}[H]
\centering
\scalebox{0.75}{
\begin{tikzpicture}[inner sep=1mm]
\node at (0,0) (A) {$A$};
\node at (2,0) (B) {$B$};
\node at (1,1) (C) {$C$};
\node at (1,-1) (D) {$D$};
\path[o-|] (A) edge (B);
\path[o-|] (A) edge (C);
\path[o-|] (A) edge (D);
\path[|-o] (C) edge (B);
\path[|-o] (D) edge (B);
\end{tikzpicture}.}
\end{table}

Moreover, recall from the antecedent of R4 that $A \in S_{CD}$, which implies that $G$ does not have a triplex $(\{C,D\},A)$, which implies that $A - C$ and $A - D$ are in $G$. Thus, $G$ has an induced subgraph that is consistent with

\begin{table}[H]
\centering
\scalebox{0.75}{
\begin{tikzpicture}[inner sep=1mm]
\node at (0,0) (A) {$A$};
\node at (2,0) (B) {$B$};
\node at (1,1) (C) {$C$};
\node at (1,-1) (D) {$D$};
\path[o-|] (A) edge (B);
\path[|-|] (A) edge (C);
\path[|-|] (A) edge (D);
\path[|-o] (C) edge (B);
\path[|-o] (D) edge (B);
\end{tikzpicture}.}
\end{table}

However, recall from the antecedent of R4 that $B \notin S_{CD}$, which implies that $G$ has a triplex $(\{C,D\},B)$, which implies that $G$ has a semidirected cycle, which is a contradiction. Therefore, $A - B$ or $A \ra B$ is in $G$. In either case, the consequent of R4 holds. 
\end{proof}

\begin{lemma}\label{lem:triplexes}
After line 11, $G$ and $H$ have the same triplexes. Moreover, $H$ has all the immoralities in $G$.
\end{lemma}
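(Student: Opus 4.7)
The plan is to combine the adjacency result (Lemma \ref{lem:adjacencies}) with soundness (Lemma \ref{lem:soundness}) through a separator-based characterization of triplexes in $G$. The first step is to verify that, for $A \notin ad_G(C)$ and $B \in ad_G(A) \cap ad_G(C)$, the triplex $(\{A,C\},B)$ is in $G$ if and only if $B \notin S_{AC}$. This follows from faithfulness: $A \ci_G C | S_{AC}$ forces the route $A - B - C$ to be $S_{AC}$-blocked, and with $B$ as the only intermediate node, blocking is equivalent to ``$B$ is head-no-tail on the route and $B \notin S_{AC}$'' or ``$B$ is not head-no-tail and $B \in S_{AC}$''.

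For the direction ``triplex in $G$ implies triplex in $H$'', together with the immorality statement, I would use this characterization to conclude that R1 fires on $(A,B,C)$ whenever $G$ has a triplex there, placing blocks at $A$'s end of $A-B$ and $C$'s end of $B-C$. The triplex in $G$ has an arrowhead at $B$ on at least one edge; without loss of generality assume $A \ra B$ is in $G$. By soundness of R1--R4, no rule can add a block at $B$'s end of $A-B$, since such a block would assert the absence of an arrowhead at $B$ there, contradicting $A \ra B$. Hence line 11 leaves $A \ra B$ in $H$, producing the triplex $(\{A,C\},B)$. If furthermore $G$ contains the immorality $A \ra B \la C$, then both outer arrowheads at $B$ are in $G$, so both $B$-end blocks are forbidden by soundness and $H$ inherits $A \ra B \la C$ directly.

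The converse direction ``triplex in $H$ implies triplex in $G$'' is where I expect the main obstacle, and would be handled by an R2 cascade. Suppose $H$ has triplex $(\{A,C\},B)$ but, for contradiction, $G$ does not, so $B \in S_{AC}$ by the characterization. Swapping the roles of $A$ and $C$ if necessary, assume $A \ra B$ is in $H$, i.e., prior to line 11 the edge $A-B$ carries a block at $A$'s end and none at $B$'s end. Since $B \in S_{AC}$, R2 fires on $(A,B,C)$ and adds a block at $B$'s end of $B-C$, ruling out $C \ra B$ in $H$. Either $B-C$ becomes $B \ra C$ in $H$, giving the non-triplex configuration $A \ra B \ra C$ and contradicting the assumption on $H$; or $B-C$ becomes undirected, which requires a block at $C$'s end too. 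In the latter case, R2 applied to $(C,B,A)$ fires (using $B \in S_{CA} = S_{AC}$) and adds a block at $B$'s end of $A-B$, contradicting the earlier ``no block at $B$'s end''. The crux is orchestrating this symmetric R2 cascade so that no spurious triplex in $H$ can survive.
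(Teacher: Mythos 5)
Your proof is correct and follows essentially the same route as the paper: R1 plus the soundness of the rules (which forbids any block at $B$'s end of an edge $A \ra B$ of $G$) gives the ``triplex in $G$ implies triplex in $H$'' direction and the preservation of immoralities, while the two applications of R2 driven by $B \in S_{AC}$ give the converse, exactly as in the paper's three-case enumeration of block configurations. Your explicit separator characterization (triplex at $B$ iff $B \notin S_{AC}$) is a welcome clarification of a step the paper leaves implicit, but it does not change the argument.
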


\begin{proof}
We first prove that any triplex in $H$ is in $G$. Assume to the contrary that $H$ has a triplex $(\{A,C\},B)$ that is not in $G$. This is possible iff, when line 11 is executed, $H$ has an induced subgraph of one of the following forms:

\begin{table}[H]
\centering
\scalebox{0.75}{
\begin{tabular}{ccc}
\begin{tikzpicture}[inner sep=1mm]
\node at (0,0) (A) {$A$};
\node at (1,0) (B) {$B$};
\node at (2,0) (C) {$C$};
\path[-] (A) edge (B);
\path[-|] (B) edge (C);
\end{tikzpicture}
&
\begin{tikzpicture}[inner sep=1mm]
\node at (0,0) (A) {$A$};
\node at (1,0) (B) {$B$};
\node at (2,0) (C) {$C$};
\path[|-] (A) edge (B);
\path[-|] (B) edge (C);
\end{tikzpicture}
&
\begin{tikzpicture}[inner sep=1mm]
\node at (0,0) (A) {$A$};
\node at (1,0) (B) {$B$};
\node at (2,0) (C) {$C$};
\path[|-|] (A) edge (B);
\path[-|] (B) edge (C);
\end{tikzpicture}
\end{tabular}}
\end{table}

where $B \in S_{AC}$ by Lemma \ref{lem:adjacencies}. The first and second forms are impossible because, otherwise, $A \ob B$ would be in $H$ by R2. The third form is impossible because, otherwise, $B \bb C$ would be in $H$ by R2.

We now prove that any triplex $(\{A,C\},B)$ in $G$ is in $H$. Let the triplex be of the form $A \ra B \la C$. Then, when line 11 is executed, $A \bo B \ob C$ is in $H$ by R1, and neither $A \bb B$ nor $B \bb C$ is in $H$ by Lemmas \ref{lem:adjacencies} and \ref{lem:soundness}. Then, the triplex is in $H$. Note that the triplex is an immorality in both $G$ and $H$. Likewise, let the triplex be of the form $A \ra B \bb C$. Then, when line 11 is executed, $A \bo B \ob C$ is in $H$ by R1, and $A \bb B$ is not in $H$ by Lemmas \ref{lem:adjacencies} and \ref{lem:soundness}. Then, the triplex is in $H$. Note that the triplex is a flag in $G$ but it may be an immorality in $H$.
\end{proof}

\begin{lemma}\label{lem:noundirectedcycle}
After line 10, $H$ does not have any induced subgraph of the form 
\begin{tabular}{c}
\scalebox{0.75}{
\begin{tikzpicture}[inner sep=1mm]
\node at (0,0) (A) {$A$};
\node at (1,0) (B) {$B$};
\node at (2,0) (C) {$C$};
\path[|-o] (A) edge (B);
\path[-] (B) edge (C);
\path[-] (A) edge [bend left] (C);
\end{tikzpicture}.}
\end{tabular}
\end{lemma}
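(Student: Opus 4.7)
The plan is to argue by contradiction. Suppose that after line 10 the graph $H$ contains the displayed induced subgraph, reading the plain ``$-$'' edges on $B$-$C$ and $A$-$C$ as $\bb$ (the mark that survives line 10 and becomes an undirected edge at line 11). The first step is to transfer information back to $G$ using Lemma~\ref{lem:soundness}: the block at $A$ on $A$-$B$ forbids $B \to A$, so in $G$ the edge is either $A \to B$ or $A - B$, while the two $\bb$ edges correspond to $A - C$ and $B - C$ in $G$.

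The easy case is $A \to B$ in $G$: then $A \to B, B - C, C - A$ is a descending cycle in $G$ containing a directed edge, i.e.\ a semidirected cycle, contradicting the fact that $G$ is a CG. This leaves the case $A - B$ in $G$, in which the entire triangle is undirected. Here I would invoke R3 applied to the cycle $B \to C \to A$ closed by the bend $A$-$B$: the two cycle edges carry blocks at their ``tails'' $B$ and $C$ (both come from $\bb$ marks), so R3 should add a block at $B$ on the bend, promoting $A \bo B$ to $A \bb B$, which contradicts both the circle at $B$ in our assumed configuration and the fact that line 10 has been iterated to saturation.

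The main obstacle is this last R3 step, since the cycle edges in our configuration are literally $\bb$ rather than the $\bo$ shown in R3's antecedent, and the bend is $\bo$ rather than $\oo$. I would need to argue that the substantive content of R3 is just the presence of a descending cycle in $H$ whose closing edge still has a circle at the relevant end, so that extra blocks at the ``head'' positions of the cycle edges only strengthen the antecedent and cannot block R3 from firing. If that reading of R3 is contested, a more laborious fallback is to branch on which of R1, R3, R4 placed the block at $A$ on $A$-$B$ in the first place and show, using the structural premise each rule requires, that the symmetric argument forces a block at $B$ on the same edge through a further rule application.
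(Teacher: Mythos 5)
There is a genuine gap, and it starts with a misreading of the statement. In the configuration to be excluded, the edges $B - C$ and $A - C$ are \emph{unblocked} at both ends; they are not $\bb$. Line 11 has not yet run, so at this stage a plain edge $-$ and an edge $\bb$ are different objects (compare the proof of Lemma~\ref{lem:oppositeblock}, which explicitly distinguishes ``an edge $-$'' from blocked edges, and note that the intended contradiction in each case of the paper's proof is precisely that a rule would place a block on $B - C$ or $A - C$, which would be no contradiction at all if those edges were already $\bb$). This misreading derails both steps of your main argument. First, the transfer to $G$ collapses: by Lemma~\ref{lem:soundness} a \emph{block} in $H$ constrains the orientation in $G$, but the \emph{absence} of blocks on $B - C$ and $A - C$ tells you nothing, so you cannot conclude these edges are undirected in $G$, and the ``easy case'' ($A \ra B$ in $G$ yields a semidirected cycle) is unsupported. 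Second, R3 cannot fire as you propose: its antecedent requires blocks at the tails of the cycle edges (i.e.\ $B \bo C$ and $C \bo A$), and the plain edges supply none; your worry about extra blocks ``strengthening'' the antecedent is moot, since the problem is the opposite --- there are no blocks where R3 needs them. Moreover, even if you could derive $A \bb B$, that would not contradict the assumed configuration, because the $B$-end of that edge is a circle, i.e.\ unspecified, so a block there is permitted.

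Your ``laborious fallback'' is in fact the paper's actual strategy, but it is not executed and is harder than you suggest. One must branch on which of R1--R4 placed the block at $A$ on the $A$--$B$ edge (you omit R2, which is one of the four cases), and in each case use the structural premise of that rule together with the separator sets $S_{XY}$ from line 7 (arguments of the form ``if $B \notin S_{CD}$ then R1 fires, else R2 fires'') to force a block onto $B - C$ or $A - C$, contradicting saturation of line 10 --- not, as you write, to force a block at $B$ on the $A$--$B$ edge, which would be harmless. Two of the subcases do not close directly and require restarting the argument on a different triple (an infinite-descent step), and the R4 case needs a further split on membership of $C$ in $S_{DE}$. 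As it stands, the proposal does not establish the lemma.
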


\begin{proof}
Assume to the contrary that the lemma does not hold. Consider the following cases.

\begin{description}
\setlength{\itemsep}{-0.1cm}
\item[Case 1] Assume that $A \bo B$ is in $H$ due to R1. That is, $H$ has an induced subgraph of one of the following forms: 

\begin{table}[H]
\centering
\scalebox{0.75}{
\begin{tabular}{cc}
\begin{tikzpicture}[inner sep=1mm]
\node at (0,0) (A) {$A$};
\node at (1,0) (B) {$B$};
\node at (2,0) (C) {$C$};
\node at (1,-1) (D) {$D$};
\path[|-o] (A) edge (B);
\path[-] (B) edge (C);
\path[-] (A) edge [bend left] (C);
\path[|-o] (D) edge (B);
\end{tikzpicture}
&
\begin{tikzpicture}[inner sep=1mm]
\node at (0,0) (A) {$A$};
\node at (1,0) (B) {$B$};
\node at (2,0) (C) {$C$};
\node at (1,-1) (D) {$D$};
\path[|-o] (A) edge (B);
\path[-] (B) edge (C);
\path[-] (A) edge [bend left] (C);
\path[|-o] (D) edge (B);
\path[o-o] (D) edge (C);
\end{tikzpicture}.\\
case 1.1&case 1.2
\end{tabular}}
\end{table}

\begin{description}
\setlength{\itemsep}{-0.1cm}
\item[Case 1.1] If $B \notin S_{CD}$ then $B \nb C$ is in $H$ by R1, else $B \bn C$ is in $H$ by R2. Either case is a contradiction.

\item[Case 1.2] If $C \notin S_{AD}$ then $A \bn C$ is in $H$ by R1, else $B \nb C$ is in $H$ by R4. Either case is a contradiction.

\end{description}

\item[Case 2] Assume that $A \bo B$ is in $H$ due to R2. That is, $H$ has an induced subgraph of one of the following forms: 

\begin{table}[H]
\centering
\scalebox{0.75}{
\begin{tabular}{cc}
\begin{tikzpicture}[inner sep=1mm]
\node at (0,0) (A) {$A$};
\node at (1,0) (B) {$B$};
\node at (2,0) (C) {$C$};
\node at (0,-1) (D) {$D$};
\path[|-o] (A) edge (B);
\path[-] (B) edge (C);
\path[-] (A) edge [bend left] (C);
\path[|-o] (D) edge (A);
\end{tikzpicture}
&
\begin{tikzpicture}[inner sep=1mm]
\node at (0,0) (A) {$A$};
\node at (1,0) (B) {$B$};
\node at (2,0) (C) {$C$};
\node at (0,-1) (D) {$D$};
\path[|-o] (A) edge (B);
\path[-] (B) edge (C);
\path[-] (A) edge [bend left] (C);
\path[|-o] (D) edge (A);
\path[-] (D) edge (C);
\end{tikzpicture}\\
case 2.1&case 2.2\\
\\
\begin{tikzpicture}[inner sep=1mm]
\node at (0,0) (A) {$A$};
\node at (1,0) (B) {$B$};
\node at (2,0) (C) {$C$};
\node at (0,-1) (D) {$D$};
\path[|-o] (A) edge (B);
\path[-] (B) edge (C);
\path[-] (A) edge [bend left] (C);
\path[|-o] (D) edge (A);
\path[o-|] (D) edge (C);
\end{tikzpicture}
&
\begin{tikzpicture}[inner sep=1mm]
\node at (0,0) (A) {$A$};
\node at (1,0) (B) {$B$};
\node at (2,0) (C) {$C$};
\node at (0,-1) (D) {$D$};
\path[|-o] (A) edge (B);
\path[-] (B) edge (C);
\path[-] (A) edge [bend left] (C);
\path[|-o] (D) edge (A);
\path[|-] (D) edge (C);
\end{tikzpicture}.\\
case 2.3&case 2.4
\end{tabular}}
\end{table}

\begin{description}
\setlength{\itemsep}{-0.1cm}
\item[Case 2.1] If $A \notin S_{CD}$ then $A \nb C$ is in $H$ by R1, else $A \bn C$ is in $H$ by R2. Either case is a contradiction.

\item[Case 2.2] Restart the proof with $D$ instead of $A$ and $A$ instead of $B$.

\item[Case 2.3] Then, $A \nb C$ is in $H$ by R3, which is a contradiction.

\item[Case 2.4] If $C \notin S_{BD}$ then $B \bn C$ is in $H$ by R1, else $B \nb C$ is in $H$ by R2. Either case is a contradiction.

\end{description}

\item[Case 3] Assume that $A \bo B$ is in $H$ due to R3. That is, $H$ has an induced subgraph of one of the following forms: 

\begin{table}[H]
\centering
\scalebox{0.75}{
\begin{tabular}{cccc}
\begin{tikzpicture}[inner sep=1mm]
\node at (-1,0) (A) {$A$};
\node at (1,0) (B) {$B$};
\node at (2,0) (C) {$C$};
\node at (1,-1) (D) {$D$};
\node at (0,-1) (E) {$\ldots$};
\path[|-o] (A) edge (B);
\path[-] (B) edge (C);
\path[-] (A) edge [bend left] (C);
\path[|-o] (A) edge [bend right] (E);
\path[|-o] (E) edge (D);
\path[|-o] (D) edge (B);
\end{tikzpicture}
&
\begin{tikzpicture}[inner sep=1mm]
\node at (-1,0) (A) {$A$};
\node at (1,0) (B) {$B$};
\node at (2,0) (C) {$C$};
\node at (1,-1) (D) {$D$};
\node at (0,-1) (E) {$\ldots$};
\path[|-o] (A) edge (B);
\path[-] (B) edge (C);
\path[-] (A) edge [bend left] (C);
\path[|-o] (A) edge [bend right] (E);
\path[|-o] (E) edge (D);
\path[|-o] (D) edge (B);
\path[-] (D) edge (C);
\end{tikzpicture}\\
case 3.1&case 3.2\\
\\
\begin{tikzpicture}[inner sep=1mm]
\node at (-1,0) (A) {$A$};
\node at (1,0) (B) {$B$};
\node at (2,0) (C) {$C$};
\node at (1,-1) (D) {$D$};
\node at (0,-1) (E) {$\ldots$};
\path[|-o] (A) edge (B);
\path[-] (B) edge (C);
\path[-] (A) edge [bend left] (C);
\path[|-o] (A) edge [bend right] (E);
\path[|-o] (E) edge (D);
\path[|-o] (D) edge (B);
\path[o-|] (D) edge (C);
\end{tikzpicture}
&
\begin{tikzpicture}[inner sep=1mm]
\node at (-1,0) (A) {$A$};
\node at (1,0) (B) {$B$};
\node at (2,0) (C) {$C$};
\node at (1,-1) (D) {$D$};
\node at (0,-1) (E) {$\ldots$};
\path[|-o] (A) edge (B);
\path[-] (B) edge (C);
\path[-] (A) edge [bend left] (C);
\path[|-o] (A) edge [bend right] (E);
\path[|-o] (E) edge (D);
\path[|-o] (D) edge (B);
\path[|-] (D) edge (C);
\end{tikzpicture}.\\
case 3.3&case 3.4
\end{tabular}}
\end{table}

\begin{description}
\setlength{\itemsep}{-0.1cm}
\item[Case 3.1] If $B \notin S_{CD}$ then $B \nb C$ is in $H$ by R1, else $B \bn C$ is in $H$ by R2. Either case is a contradiction.

\item[Case 3.2] Restart the proof with $D$ instead of $A$.

\item[Case 3.3] Then, $B \nb C$ is in $H$ by R3, which is a contradiction.

\item[Case 3.4] Then, $A \bn C$ is in $H$ by R3, which is a contradiction.

\end{description}

\item[Case 4] Assume that $A \bo B$ is in $H$ due to R4. That is, $H$ has an induced subgraph of one of the following forms:

\begin{table}[H]
\centering
\scalebox{0.75}{
\begin{tabular}{ccccc}
\begin{tikzpicture}[inner sep=1mm][inner sep=1mm]
\node at (0,0) (A) {$A$};
\node at (1,0) (B) {$B$};
\node at (2,0) (C) {$C$};
\node at (1,-1) (D) {$D$};
\node at (1,-2) (E) {$E$};
\path[|-o] (A) edge (B);
\path[-] (B) edge (C);
\path[-] (A) edge [bend left] (C);
\path[o-o] (A) edge [bend right] (D);
\path[|-o] (D) edge (B);
\path[o-o] (A) edge [bend right] (E);
\path[|-o] (E) edge [bend left] (B);
\end{tikzpicture}
&
\begin{tikzpicture}[inner sep=1mm][inner sep=1mm]
\node at (0,0) (A) {$A$};
\node at (1,0) (B) {$B$};
\node at (2,0) (C) {$C$};
\node at (1,-1) (D) {$D$};
\node at (1,-2) (E) {$E$};
\path[|-o] (A) edge (B);
\path[-] (B) edge (C);
\path[-] (A) edge [bend left] (C);
\path[o-o] (A) edge [bend right] (D);
\path[|-o] (D) edge (B);
\path[o-o] (A) edge [bend right] (E);
\path[|-o] (E) edge [bend left] (B);
\path[o-o] (E) edge [bend right] (C);
\end{tikzpicture}\\
case 4.1&case 4.2\\
\\
\begin{tikzpicture}[inner sep=1mm][inner sep=1mm]
\node at (0,0) (A) {$A$};
\node at (1,0) (B) {$B$};
\node at (2,0) (C) {$C$};
\node at (1,-1) (D) {$D$};
\node at (1,-2) (E) {$E$};
\path[|-o] (A) edge (B);
\path[-] (B) edge (C);
\path[-] (A) edge [bend left] (C);
\path[o-o] (A) edge [bend right] (D);
\path[|-o] (D) edge (B);
\path[o-o] (A) edge [bend right] (E);
\path[|-o] (E) edge [bend left] (B);
\path[o-o] (D) edge [bend right] (C);
\end{tikzpicture}
&
\begin{tikzpicture}[inner sep=1mm]
\node at (0,0) (A) {$A$};
\node at (1,0) (B) {$B$};
\node at (2,0) (C) {$C$};
\node at (1,-1) (D) {$D$};
\node at (1,-2) (E) {$E$};
\path[|-o] (A) edge (B);
\path[-] (B) edge (C);
\path[-] (A) edge [bend left] (C);
\path[o-o] (A) edge [bend right] (D);
\path[|-o] (D) edge (B);
\path[o-o] (A) edge [bend right] (E);
\path[|-o] (E) edge [bend left] (B);
\path[o-o] (D) edge [bend right] (C);
\path[o-o] (E) edge [bend right] (C);
\end{tikzpicture}.\\
case 4.3&case 4.4
\end{tabular}}
\end{table}

\begin{description}
\setlength{\itemsep}{-0.1cm}
\item[Cases 4.1-4.3] If $B \notin S_{CD}$ or $B \notin S_{CE}$ then $B \nb C$ is in $H$ by R1, else $B \bn C$ is in $H$ by R2. Either case is a contradiction.

\item[Case 4.4] Assume that $C \in S_{DE}$. Note that $B \notin S_{DE}$ because, otherwise, R4 would not have been applied. Then, $B \nb C$ is in $H$ by R4, which is a contradiction. On the other hand, assume that $C \notin S_{DE}$. Then, it follows from applying R1 that $H$ has an induced subgraph of the form

\begin{table}[H]
\centering
\scalebox{0.75}{
\begin{tikzpicture}[inner sep=1mm]
\node at (0,0) (A) {$A$};
\node at (1,0) (B) {$B$};
\node at (2,0) (C) {$C$};
\node at (1,-1) (D) {$D$};
\node at (1,-2) (E) {$E$};
\path[|-o] (A) edge (B);
\path[-] (B) edge (C);
\path[-] (A) edge [bend left] (C);
\path[o-o] (A) edge [bend right] (D);
\path[|-o] (D) edge (B);
\path[o-o] (A) edge [bend right] (E);
\path[|-o] (E) edge [bend left] (B);
\path[|-o] (D) edge [bend right] (C);
\path[|-o] (E) edge [bend right] (C);
\end{tikzpicture}.}
\end{table}

Note that $A \in S_{DE}$ because, otherwise, R4 would not have been applied. Then, $A \bn C$ is in $H$ by R4, which is a contradiction.

\end{description}

\end{description}

\end{proof}

\begin{lemma}\label{lem:oppositeblock}
After line 10, every cycle in $H$ that has an edge $\bn$ also has an edge $\nb$.
\end{lemma}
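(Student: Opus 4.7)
The plan is a proof by contradiction. Suppose $H$ has a cycle $\rho = V_1, V_2, \ldots, V_n = V_1$ that contains an edge of type $\bn$ (a block at the leaving end in the cycle's traversal direction) but no edge of type $\nb$; take $\rho$ to be of minimum length. Without loss of generality, assume $V_1 V_2$ is the $\bn$-edge, so a block sits at the $V_1$ end of $V_1 V_2$ while its $V_2$ end carries a circle.

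By the no-$\nb$ hypothesis, every edge $V_i V_{i+1}$ of $\rho$ has its $V_{i+1}$ end unblocked, hence a circle (since after line~10 every unmarked end is a circle). The strategy is to show that the return path $V_2, V_3, \ldots, V_n = V_1$ consists entirely of $\bo$-edges, i.e.\ each edge $V_i V_{i+1}$ (for $i \geq 2$) also carries a block at its $V_i$ end. Granted this, R3 applied to the resulting chain $V_2 \bo V_3 \bo \cdots \bo V_n = V_1$ together with the adjacency $V_1 V_2$ (the bent edge in R3's schema) forces a block at the $V_2$ end of $V_1 V_2$, contradicting the choice of $V_1 V_2$ as a $\bn$-edge with a circle at $V_2$.

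To establish that every leaving end $V_i$ on the return path carries a block, I would suppose some edge $V_i V_{i+1}$ with $i \geq 2$ has a circle at $V_i$ and choose $i$ minimal, so that $V_2, \ldots, V_i$ is already a $\bo$-chain. Note $V_{i-1}$ and $V_{i+1}$ cannot be adjacent in $H$, for otherwise the $\bo$-edge $V_{i-1} V_i$, the $\oo$-edge $V_i V_{i+1}$, and the chord $V_{i-1} V_{i+1}$ would violate Lemma~\ref{lem:noundirectedcycle}. Hence $(V_{i-1}, V_{i+1})$ is a non-adjacent pair; by Lemma~\ref{lem:adjacencies} the separator $S_{V_{i-1} V_{i+1}}$ is defined. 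If $V_i \notin S_{V_{i-1} V_{i+1}}$, then R1 would have placed a block at $V_i$ on $V_i V_{i+1}$; if $V_i \in S_{V_{i-1} V_{i+1}}$, then R2 propagates the block at $V_{i-1}$ along $V_{i-1} V_i$ to place a block at $V_i$ on $V_i V_{i+1}$. Either way the circle-at-$V_i$ assumption fails, completing the inductive step and hence the main claim.

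The principal obstacle is the combinatorics of chord configurations: an adjacency $V_j V_k$ with $j+1 < k$ either creates a shorter bad cycle (contradicting minimality of $\rho$, once one verifies the shorter cycle inherits a $\bn$ and lacks an $\nb$) or interacts with R3 and R4 to pin down end marks in a way that must be tracked carefully. Lemma~\ref{lem:noundirectedcycle} is the main external tool and rules out exactly the residual configurations in which an $\oo$-edge coexists with a $\bo$-edge around a shared node, keeping the case analysis tractable.
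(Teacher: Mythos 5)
Your overall architecture matches the paper's: locate the first edge of the cycle whose leaving end is unblocked, examine the chord closing the resulting triangle, and use R1/R2, Lemma \ref{lem:noundirectedcycle} and R3 to force a contradiction (the paper shortcuts across the chord and iterates rather than invoking minimality, but that difference is cosmetic). However, the key step of your induction is wrong as stated. You claim that $V_{i-1}$ and $V_{i+1}$ cannot be adjacent because the configuration would violate Lemma \ref{lem:noundirectedcycle}. That lemma only forbids the induced subgraph in which the chord $V_{i-1}V_{i+1}$ is \emph{fully unblocked}. The chord may carry blocks, and then three sub-cases arise: if it has a block at the $V_{i+1}$ end, the chain $V_{i+1} \bo V_{i-1} \bo V_i$ plus the adjacency $V_{i+1},V_i$ lets R3 place a block at the $V_{i+1}$ end of $V_iV_{i+1}$, contradicting the no-$\nb$ hypothesis; but if the chord is $V_{i-1} \bn V_{i+1}$ (block only at the $V_{i-1}$ end), neither Lemma \ref{lem:noundirectedcycle} nor R1/R2 applies (R1 and R2 require non-adjacency), and nothing forces a block at $V_i$. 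This last configuration is genuinely realizable and is exactly the case the paper handles by passing to the shorter cycle $V_1,\ldots,V_{i-1},V_{i+1},\ldots,V_n$. Your closing paragraph acknowledges that chords are ``the principal obstacle'' and gestures at the minimality argument, but the body of your proof dismisses the chord outright, so the induction does not go through.

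A second, smaller gap: the minimality escape fails for triangles. When $\rho$ has length three, the ``chord'' $V_{i-1}V_{i+1}$ is the third edge of $\rho$ itself, and shortcutting yields the degenerate length-two cycle $V_{i-1},V_{i+1},V_{i-1}$, which traverses one edge in both directions and therefore automatically contains an $\nb$ edge whenever it contains a $\bn$ edge; minimality gives no contradiction there. The paper disposes of length-three cycles by a separate explicit case analysis (using Lemma \ref{lem:noundirectedcycle} for a fully unblocked third edge and R3 otherwise), and your proof needs an analogous base case.
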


\begin{proof}
Assume to the contrary that $H$ has a cycle $\rho: V_1, \ldots, V_n=V_1$ that has an edge $\bn$ but no edge $\nb$. Note that every edge in $\rho$ cannot be $\bo$ because, otherwise, every edge in $\rho$ would be $\bb$ by repeated application of R3, which contradicts the assumption that $\rho$ has an edge $\bn$. Therefore, $\rho$ has an edge $-$ or $\nb$. Since the latter contradicts the assumption that the lemma does not hold, $\rho$ has an edge $-$. Assume that $\rho$ is of length three. Then, $\rho$ is of one of the following forms:

\begin{table}[H]
\centering
\scalebox{0.75}{
\begin{tabular}{ccc}
\begin{tikzpicture}[inner sep=1mm]
\node at (0,0) (A) {$V_1$};
\node at (1,0) (B) {$V_{2}$};
\node at (2,0) (C) {$V_{3}$};
\path[|-] (A) edge (B);
\path[-] (B) edge (C);
\path[-] (A) edge [bend left] (C);
\end{tikzpicture}
&
\begin{tikzpicture}[inner sep=1mm]
\node at (0,0) (A) {$V_1$};
\node at (1,0) (B) {$V_{2}$};
\node at (2,0) (C) {$V_{3}$};
\path[|-] (A) edge (B);
\path[-] (B) edge (C);
\path[o-|] (A) edge [bend left] (C);
\end{tikzpicture}
&
\begin{tikzpicture}[inner sep=1mm]
\node at (0,0) (A) {$V_1$};
\node at (1,0) (B) {$V_{2}$};
\node at (2,0) (C) {$V_{3}$};
\path[|-] (A) edge (B);
\path[|-o] (B) edge (C);
\path[-] (A) edge [bend left] (C);
\end{tikzpicture}.
\end{tabular}}
\end{table}

The first form is impossible by Lemma \ref{lem:noundirectedcycle}. The second form is impossible because, otherwise, $V_{2} \nb V_{3}$ would be in $H$ by R3. The third form is impossible because, otherwise, $V_{1} \bn V_{3}$ would be in $H$ by R3. Thus, the lemma holds for cycles of length three.

Assume that $\rho$ is of length greater than three. Recall from above that $\rho$ has an edge $-$ and no edge $\nb$. Let $V_{i+1} - V_{i+2}$ be the first edge $-$ in $\rho$. Assume without loss of generality that $i>0$. Then, $\rho$ has a subpath of the form $V_i \bo V_{i+1} - V_{i+2}$. Note that $V_i \in ad_H(V_{i+2})$ because, otherwise, if $V_{i+1} \notin S_{V_iV_{i+2}}$ then $V_{i+1} \nb V_{i+2}$ would be in $H$ by R1, else $V_{i+1} \bn V_{i+2}$ would be in $H$ by R2. Thus, $H$ has an induced subgraph of one of the following forms:

\begin{table}[H]
\centering
\scalebox{0.75}{
\begin{tabular}{ccc}
\begin{tikzpicture}[inner sep=1mm]
\node at (0,0) (A) {$V_i$};
\node at (1,0) (B) {$V_{i+1}$};
\node at (2,0) (C) {$V_{i+2}$};
\path[|-o] (A) edge (B);
\path[-] (B) edge (C);
\path[-] (A) edge [bend left] (C);
\end{tikzpicture}
&
\begin{tikzpicture}[inner sep=1mm]
\node at (0,0) (A) {$V_i$};
\node at (1,0) (B) {$V_{i+1}$};
\node at (2,0) (C) {$V_{i+2}$};
\path[|-o] (A) edge (B);
\path[-] (B) edge (C);
\path[o-|] (A) edge [bend left] (C);
\end{tikzpicture}
&
\begin{tikzpicture}[inner sep=1mm]
\node at (0,0) (A) {$V_i$};
\node at (1,0) (B) {$V_{i+1}$};
\node at (2,0) (C) {$V_{i+2}$};
\path[|-o] (A) edge (B);
\path[-] (B) edge (C);
\path[|-] (A) edge [bend left] (C);
\end{tikzpicture}.
\end{tabular}}
\end{table}

The first form is impossible by Lemma \ref{lem:noundirectedcycle}. The second form is impossible because, otherwise, $V_{i+1} \nb V_{i+2}$ would be in $H$ by R3. Thus, the third form is the only possible. Note that this implies that $\varrho: V_1, \ldots, V_i, V_{i+2}, \ldots, V_n=V_1$ is a cycle in $H$ that has an edge $\bn$ and no edge $\nb$. 

By repeatedly applying the reasoning above, one can see that $H$ has a cycle of length three that has an edge $\bn$ and no edge $\nb$. As shown above, this is impossible. Thus, the lemma holds for cycles of length greater than three too.
\end{proof}

\begin{theorem}\label{the:acyclic}
After line 11, $H$ is triplex equivalent to $G$ and it has no semidirected cycle.
\end{theorem}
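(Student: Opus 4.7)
The plan is to combine the preceding lemmas into a two-part argument.

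For triplex equivalence, I would simply observe that Lemma~\ref{lem:adjacencies} gives $G$ and $H$ the same adjacencies after line 9, that lines 10 and 11 only add or rewrite edge marks without inserting or removing any edge, and that Lemma~\ref{lem:triplexes} gives the same triplexes after line 11. By the definition of triplex equivalence, this concludes the first half.

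For the absence of semidirected cycles, I would argue by contradiction. Suppose $H$ contains a semidirected cycle $\rho: V_1, \ldots, V_n = V_1$ after line 11, and traverse it in its descending direction, so that each consecutive edge is either $V_i \ra V_{i+1}$ or $V_i - V_{i+1}$, with at least one of the former. Unwinding line 11, every such $\ra$ arose from an edge $V_i \bo V_{i+1}$ with block at $V_i$ and circle at $V_{i+1}$, and every such $-$ arose from an edge $V_i \bb V_{i+1}$ with blocks at both ends. Thus, read in the cycle direction, $\rho$ contains at least one $\bn$ edge (contributed by a directed edge) but not a single $\nb$ edge, because a $\nb$ in the cycle direction would be an edge turning into $V_i \la V_{i+1}$ after line 11, which cannot occur in a descending cycle. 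This violates Lemma~\ref{lem:oppositeblock}.

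The only thing to watch out for is the bookkeeping of mark orientations relative to the chosen cycle traversal. Once the descending traversal is fixed, the pre-line-11 marks translate uniformly into post-line-11 edge types ($\bo \mapsto \ra$, $\bb \mapsto -$, $\ob \mapsto \la$), and Lemma~\ref{lem:oppositeblock} applies directly. No further combinatorial work beyond Lemmas~\ref{lem:adjacencies}, \ref{lem:triplexes}, and \ref{lem:oppositeblock} should be needed.
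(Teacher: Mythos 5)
Your proposal is correct and follows the same route as the paper, which simply cites Lemma~\ref{lem:adjacencies} for the adjacencies, Lemma~\ref{lem:triplexes} for the triplexes, and Lemma~\ref{lem:oppositeblock} for acyclicity; you merely make explicit the translation step (a semidirected cycle after line 11 would correspond, when traversed in its descending direction, to a cycle before line 11 with an edge $\bn$ but no edge $\nb$), which the paper leaves implicit. One negligible imprecision: an undirected edge after line 11 may come from a fully unblocked edge as well as from $\bb$, but neither is a $\nb$ edge, so your contradiction with Lemma~\ref{lem:oppositeblock} stands.
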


\begin{proof}
Lemma \ref{lem:adjacencies} implies that $G$ and $H$ have the same adjacencies. Lemma \ref{lem:triplexes} implies that $G$ and $H$ have the same triplexes. Lemma \ref{lem:oppositeblock} implies that $H$ has no semidirected cycle.
\end{proof}

\section{Discussion}\label{sec:discussion}

In this paper, we have presented an algorithm for learning an AMP CG a given probability distribution $p$ is faithful to. In practice, of course, we do not usually have access to $p$ but to a finite sample from it. Our algorithm can easily be modified to deal with this situation: Replace $A \ci_p B | S$ in line 6 with a hypothesis test, preferably with one that is consistent so that the resulting algorithm is asymptotically correct.

It is worth mentioning that, whereas R1, R2 and R4 only involve three or four nodes, R3 may involve many more. Hence, it would be desirable to replace R3 with a simpler rule such as 

\begin{table}[H]
\centering
\scalebox{0.75}{
\begin{tabular}{ccc}
\begin{tabular}{c}
\begin{tikzpicture}[inner sep=1mm]
\node at (0,0) (A) {$A$};
\node at (1,0) (B) {$B$};
\node at (2,0) (C) {$C$};
\path[|-o] (A) edge (B);
\path[|-o] (B) edge (C);
\path[o-o] (A) edge [bend left] (C);
\end{tikzpicture}
\end{tabular}
& $\Rightarrow$ &
\begin{tabular}{c}
\begin{tikzpicture}[inner sep=1mm]
\node at (0,0) (A) {$A$};
\node at (1,0) (B) {$B$};
\node at (2,0) (C) {$C$};
\path[|-o] (A) edge (B);
\path[|-o] (B) edge (C);
\path[|-o] (A) edge [bend left] (C);
\end{tikzpicture}.
\end{tabular}
\end{tabular}}
\end{table}

Unfortunately, we have not succeeded so far in proving the correctness of our algorithm with such a simpler rule. Note that the output of our algorithm will be the same whether we keep R3 or we replace it with a simpler sound rule. The only benefit of the simpler rule may be a decrease in running time.

We have shown in Lemma \ref{lem:triplexes} that, after line 11, $H$ has all the immoralities in $G$ or, in other words, every flag in $H$ is in $G$. The following lemma strengthens this fact.

\begin{lemma}\label{lem:essentialline}
After line 11, every flag in $H$ is in every CG $F$ that is triplex equivalent to $G$.
\end{lemma}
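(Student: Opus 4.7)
The plan is to establish a strengthening of Lemma \ref{lem:soundness}: every block that the rules R1--R4 introduce in $H$ corresponds to a non-arrowhead mark that must appear in \emph{every} triplex equivalent CG $F$, not merely in $G$. Concretely, I would prove by induction on the sequence of rule applications that whenever $H$ contains a block at end $X$ of an edge $XY$, the edge between $X$ and $Y$ in any such $F$ is $X \to Y$ or $X - Y$ (i.e.\ $X$ is not an arrowhead of this edge in $F$). The base case is vacuous since $H$ starts out with no blocks.

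For the inductive step, Lemmas \ref{lem:adjacencies} and \ref{lem:triplexes} together with the definition of triplex equivalence guarantee that $F$, $G$, and $H$ share the same adjacencies and triplexes, and the separator-based conditions $B \in S_{AC}$ and $B \notin S_{AC}$ used by the rules are witnesses to triplex absence and presence respectively, both of which transfer unchanged to $F$. Each rule can then be re-analyzed with $F$ in place of $G$: R1 invokes only the presence of a triplex in $F$, which forbids $A$ and $C$ from being arrowheads; R2 combines the absence of the triplex in $F$ with the inductively inherited block on the $A$-end of $AB$ to rule out $B$ as an arrowhead on $BC$; R3 uses that $F$ is a CG, so the descending route from $A$ to $B$ assembled from the inherited blocks cannot be closed by an edge $A \leftarrow B$ without creating a semidirected cycle; R4 follows the case analysis of Lemma \ref{lem:soundness} verbatim, now carried out inside $F$.

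With this strengthened soundness in hand, the lemma is immediate. A flag $A \to B - C$ in $H$ records precisely that edge $AB$ has a block only at $A$'s end and that edge $BC$ has blocks at both ends. By the strengthened claim, in $F$ the vertex $A$ is not an arrowhead of $AB$, and both $B$ and $C$ are non-arrowheads of $BC$; hence edge $BC$ is undirected in $F$. Triplex equivalence supplies the triplex $(\{A,C\}, B)$ in $F$, and among its three realizations $A \to B \leftarrow C$, $A \to B - C$, $A - B \leftarrow C$ only $A \to B - C$ is compatible with $B - C$, so this flag must lie in $F$.

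The main obstacle is the inductive step for R4, because that rule couples information from the other three rules and involves five nodes. The key observation is that the argument for R4 in Lemma \ref{lem:soundness} relies solely on triplex presence/absence and on the prohibition of semidirected cycles in a CG, both of which are inherited by $F$; hence the ``assume $A \leftarrow B$ and derive a semidirected cycle'' contradiction transfers verbatim once the non-arrowhead marks currently tracked in $H$ are interpreted as non-arrowhead marks in $F$.
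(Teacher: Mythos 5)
Your proposal is correct and follows essentially the same route as the paper: the paper's proof likewise observes that all blocks are derived from adjacencies and triplexes alone (equivalently, from the separator memberships witnessing them), so the soundness argument of Lemma \ref{lem:soundness} applies verbatim with $F$ in place of $G$, and hence every block in $H$ constrains $F$ as well. Your explicit induction on rule applications and the final case analysis of the triplex $(\{A,C\},B)$ given $B - C$ in $F$ merely spell out steps the paper leaves implicit.
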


\begin{proof}
Note that every flag in $H$ is due to an induced subgraph of the form $A \bn B \bb C$. Note also that all the blocks in $H$ follow from the adjacencies and triplexes in $G$ by repeated application of R1-R4. Since $G$ and $F$ have the same adjacencies and triplexes, all the blocks in $H$ hold in both $G$ and $F$ by Lemma \ref{lem:soundness}. 
\end{proof}

The lemma above implies that, in terms of \cite{RoveratoandStudeny2006}, our algorithm outputs a deflagged graph. \cite{RoveratoandStudeny2006} also introduce the concept of strongly equivalent CGs: Two CGs are strongly equivalent iff they have the same adjacencies, immoralities and flags. Unfortunately, not every edge $\ra$ in $H$ after line 11 is in every deflagged graph that is triplex equivalent to $G$, as the following example illustrates, where both $G$ and $H$ are deflagged graphs. 

\begin{table}[H]
\centering
\scalebox{0.75}{
\begin{tabular}{cc}
\begin{tikzpicture}[inner sep=1mm]
\node at (0,0) (A) {$A$};
\node at (1,0) (B) {$B$};
\node at (0,-1) (C) {$C$};
\node at (1,-1) (D) {$D$};
\node at (2,-1) (E) {$E$};
\path[->] (A) edge (C);
\path[->] (B) edge (D);
\path[-] (C) edge (D);
\path[-] (D) edge (E);
\path[->] (B) edge (E);
\end{tikzpicture}
&
\begin{tikzpicture}[inner sep=1mm]
\node at (0,0) (A) {$A$};
\node at (1,0) (B) {$B$};
\node at (0,-1) (C) {$C$};
\node at (1,-1) (D) {$D$};
\node at (2,-1) (E) {$E$};
\path[->] (A) edge (C);
\path[->] (B) edge (D);
\path[-] (C) edge (D);
\path[->] (D) edge (E);
\path[->] (B) edge (E);
\end{tikzpicture}\\
$G$&$H$
\end{tabular}}
\end{table}

Therefore, in terms of \cite{RoveratoandStudeny2006}, our algorithm outputs a deflagged graph but not the largest deflagged graph. The latter is a distinguished member of a class of triplex equivalent CGs. Fortunately, the largest deflagged graph can easily be obtained from any deflagged graph in the class \cite[Corollary 17]{RoveratoandStudeny2006}. 

The correctness of our algorithm lies upon the assumption that $p$ is faithful to some CG. This is a strong requirement that we would like to weaken, e.g. by replacing it with the milder assumption that $p$ satisfies the composition property. Correct algorithms for learning directed and acyclic graphs (a.k.a Bayesian networks) under the composition property assumption exist \citep{ChickeringandMeek2002,Nielsenetal.2003}. We have recently developed a correct algorithm for learning LWF CGs under the composition property \citep{Pennaetal.2012}. The way in which these algorithms proceed (a.k.a. score+search based approach) is rather different from that of the algorithm presented in this paper (a.k.a. constraint based approach). In a nutshell, they can be seen as consisting of two phases: A first phase that starts from the empty graph $H$ and adds single edges to it until $p$ is Markovian wrt $H$, and a second phase that removes single edges from $H$ until $p$ is Markovian wrt $H$ and $p$ is not Markovian wrt any CG $F$ st $I(H) \subseteq I(F)$. The success of the first phase is guaranteed by the composition property assumption, whereas the success of the second phase is guaranteed by the so-called Meek's conjecture \citep{Meek1997}. Specifically, given two directed and acyclic graphs $F$ and $H$ st $I(H) \subseteq I(F)$, Meek's conjecture states that we can transform $F$ into $H$ by a sequence of operations st, after each operation, $F$ is a directed and acyclic graph and $I(H) \subseteq I(F)$. The operations consist in adding a single edge to $F$, or replacing $F$ with a triplex equivalent directed and acyclic graph. Meek's conjecture was proven to be true in \citep[Theorem 4]{Chickering2002}. The extension of Meek's conjecture to LWF CGs was proven to be true in \citep[Theorem 1]{Penna2011}. Unfortunately, the extension of Meek's conjecture to AMP CGs does not hold, as the following example illustrates.

\begin{table}[H]
\centering
\scalebox{0.75}{
\begin{tabular}{cc}
\begin{tikzpicture}[inner sep=1mm]
\node at (0,0) (A) {$A$};
\node at (1,0) (B) {$B$};
\node at (-1,-1) (C) {$C$};
\node at (0,-1) (D) {$D$};
\node at (1,-1) (E) {$E$};
\path[->] (A) edge (D);
\path[->] (B) edge (E);
\path[-] (C) edge (D);
\path[-] (D) edge (E);
\end{tikzpicture}
&
\begin{tikzpicture}[inner sep=1mm]
\node at (0,0) (A) {$A$};
\node at (1,0) (B) {$B$};
\node at (-1,-1) (C) {$C$};
\node at (0,-1) (D) {$D$};
\node at (1,-1) (E) {$E$};
\path[->] (A) edge (D);
\path[-] (B) edge (E);
\path[-] (C) edge (D);
\path[-] (D) edge (E);
\path[-] (B) edge (D);
\end{tikzpicture}\\
$F$&$H$
\end{tabular}}
\end{table}

Then, $I(H)=\{X \ci_H Y | Z : X \ci_H Y | Z \in I_1(H) \cup I_2(H) \lor Y \ci_H X | Z \in I_1(H) \cup I_2(H) \}$ where $I_1(H)=\{A \ci_H Y | Z : Y, Z \subseteq B \cup C \cup E \land D \notin Z \}$ and $I_2(H)=\{C \ci_H Y | Z : Y \subseteq B \cup E \land A \cup D \subseteq Z \}$. One can easily confirm that $I(H) \subseteq I(F)$ by using the definition of separation. However, there is no CG that is triplex equivalent to $F$ or $H$ and, obviously, one cannot transform $F$ into $H$ by adding a single edge.

While the example above compromises the development of score+search learning algorithms that are correct and efficient under the composition property assumption, it is not clear to us whether it also does it for constraint based algorithms. This is something we plan to study.

\section*{Acknowledgments}
\small
This work is funded by the Center for Industrial Information Technology (CENIIT) and a so-called career contract at Link\"oping University, and by the Swedish Research Council (ref. 2010-4808).

\end{document}